\documentclass[letterpaper]{article}
\usepackage{aaai2027}
\usepackage[hyphens]{url}
\usepackage{graphicx}
\usepackage{natbib}
\usepackage{caption}
\usepackage{algorithm}
\usepackage{algorithmic}
\usepackage{booktabs}
\usepackage{amsmath}
\usepackage{amssymb}
\usepackage{array}

\newcommand{\carve}{\textsc{CARVE}}
\newcommand{\method}{\textsc{CARVE-Q}}

\newcommand{\rules}{\mathcal{H}}
\newcommand{\cost}{\Phi}
\newcommand{\cert}{\mathcal{C}}
\newcommand{\fallback}{\mathcal{A}_{\mathrm{fb}}}
\newcommand{\bbone}{\mathbf{1}}
\newtheorem{theorem}{Theorem}
\newtheorem{lemma}{Lemma}
\newtheorem{definition}{Definition}

\title{CARVE-Q: Quantum-Proposed, Classically Certified Interactive Driving Repair}
\author{
Yifan Wang
}
\affiliations{
Department of Mechanical Engineering, McGill University\\
QC, H3A 2T7, Canada\\
\texttt{yifan.wang18@mail.mcgill.ca}
}

\begin{document}
\maketitle

\begin{abstract}
The critical question after a correct driving veto is not only whether a
maneuver is unsafe, but whether the blocked interaction admits a lawful,
auditable, and responsibility-bounded repair. Prediction and game-theoretic
planners can suggest plausible cooperation, yet they do not return a proof that
the repair respects hard rules, right-of-way, cost allocation, and ego fallback.
We introduce \carve, \emph{Certified Affordable Repair of Vetoed maneuvers via
Envelopes}, a certificate architecture for prediction-free interactive repair.
Given a vetoed maneuver, \carve\ constructs a finite repair lattice and emits a
structured certificate recording the binding rule, selected joint repair,
right-of-way-scaled cooperation envelope, responsibility-weighted cost split, and
ego-only fallback. This certificate view reveals the algorithmic bottleneck:
multi-owner repair induces a product lattice
\(M=\prod_j|\mathcal A_j|\). We therefore introduce \method, a verifier-shielded
quantum-AI search layer that applies quantum minimum finding only to this
black-box lattice while leaving all safety authority classical. In the
conservative verifier-oracle model, exact classical minimum finding requires
\(\Theta(M)\) queries in the worst case, whereas Durr--Hoyer/Grover minimum
finding uses \(O(\sqrt M)\) oracle queries with high probability. We prove
verifier-shielded certificate soundness, priority non-elicitation, black-box
query separation, and finite-precision reversible-oracle constructibility. We
then demonstrate statevector minimum finding on \carve\ repair oracles up to
65,536 assignments and validate certificate preservation on Lanelet2-grounded
INTERACTION replay with 100\% right-of-way respect, 100\% blame consistency, and
zero priority false positives. The result is a trust-bounded quantum-AI pattern
for certified autonomy: quantum proposes; \carve\ certifies.
\end{abstract}

\section{Introduction}

The most important decision in interactive driving may begin after the correct
veto. Consider an ego vehicle entering a dense urban merge. A hard-rule gate
rejects the maneuver because another vehicle is marginally inside the required
gap. The rejection is safe, but it is not yet an explanation of what should
happen next. A predictor may guess that the other driver will slow down; a
game-theoretic planner may search for a compatible response. Neither object is
the proof a safety-critical system needs: a certificate stating why the maneuver
was vetoed, what bounded repair is allowed, who is asked to accommodate, how the
cost is assigned, and what ego can still do safely if cooperation does not
arrive.

This example exposes a disconnect. Hard-rule vetoes are essential for
unrecoverable danger, yet they are behaviorally rigid when a conflict can be
resolved through a small, bounded accommodation such as waiting, yielding, or
decelerating. Prediction and social driving models estimate likely interactions
\citep{sadigh2016planning,schwarting2019social,alahi2016social,
deo2018convolutional,salzmann2020trajectron}, while rulebooks, RSS, shielding,
and formal-safety methods specify constraints \citep{shalev2017rss,
censi2019rulebooks,alshiekh2018shielding,garcia2015safe}. The missing object is
neither another trajectory predictor nor another veto rule; it is a
certificate-bearing interactive repair.

We introduce \carve, \emph{Certified Affordable Repair of Vetoed maneuvers via
Envelopes}. \carve\ elevates a rejected maneuver from terminal failure into a
finite repair problem whose output is an auditable liability record. The
certificate names the binding hard rule, selects a bounded joint repair, checks
right-of-way-scaled cooperation envelopes, records a responsibility-weighted
cost split, and preserves an ego-only fallback. A repairable interaction must
therefore answer three questions at once: does the repair make every declared
hard-rule margin nonnegative, does every requested accommodation stay inside a
normatively admissible envelope, and can ego still recover without assuming
another driver's compliance?

\carve\ also exposes a search bottleneck. If \(n\) repair owners each have a
finite action set \(\mathcal A_j\), exact joint repair searches a product
lattice of size \(M=\prod_j|\mathcal A_j|\). Dense unprotected turns, narrow
construction-zone negotiations, and multi-agent merge conflicts can require
several vehicles to choose among wait, decelerate, yield, or no-op edits. The
quantum component is not imposed on the problem; it appears only after \carve\
converts interaction repair into this finite certificate lattice.

The black-box verifier model is the safety assumption that makes the quantum
layer meaningful. If the verifier exposes stable convex, separable, or
low-treewidth structure, white-box classical solvers should exploit it. In
safety-critical integration, however, rulebooks can change, predicates can be
nonconvex or adversarially coupled, and future shields may include proprietary
perception modules. A trust-bounded design therefore assumes no exploitable
structure and analyzes the verifier as an opaque cost oracle. Under this model,
exact classical minimum finding has a worst-case \(\Theta(M)\) query
requirement, whereas Grover/Durr--Hoyer minimum finding gives \(O(\sqrt M)\)
oracle queries with high probability \citep{grover1996,durr1996minimum,
bennett1997strengths,boyer1998tight,brassard2002amplitude}. This is the
precise sense in which quantum search is necessary in \method.

\method\ follows one trust boundary throughout: quantum proposes; \carve\
certifies. The quantum module searches for a low-cost joint repair, but the
deterministic \carve\ verifier recomputes every predicate before any
certificate can be emitted. We do not claim a new quantum search algorithm,
present-day hardware speedup, universal superiority over white-box solvers, or
quantum-certified safety. The claim is architectural and oracle-theoretic:
\carve\ defines the certified repair object, \method\ targets its black-box
joint-lattice bottleneck, and \carve\ remains the final certificate authority.

Our contributions are:
\begin{itemize}
\item \textbf{Certified interactive repair.} We formulate hard-rule-vetoed
driving interaction as a finite repair-certificate problem rather than a
prediction, game response, or ego-only replanning problem.
\item \textbf{\carve\ certificate semantics.} We define right-of-way-scaled
cooperation envelopes \(B_j(s)=\beta(\pi_j)\alpha_j^{\max}(s)\),
responsibility-weighted costs, affordability predicates, and fallback validity.
\item \textbf{Verifier-shielded quantum-AI search.} We show that multi-owner
repair induces a product lattice \(M=\prod_j|\mathcal A_j|\) and introduce
\method, which applies quantum minimum finding to the black-box verifier-cost
oracle while preserving classical certificate authority.
\item \textbf{Guarantees and evidence.} We prove verifier-shielded soundness,
priority non-elicitation, black-box query separation, and reversible-oracle
constructibility; we validate them with statevector minimum finding, black-box
stress, resource accounting, replay diagnostics, and QAOA/local-search audits.
\end{itemize}

\begin{figure*}[t]
\centering
\includegraphics[width=\textwidth]{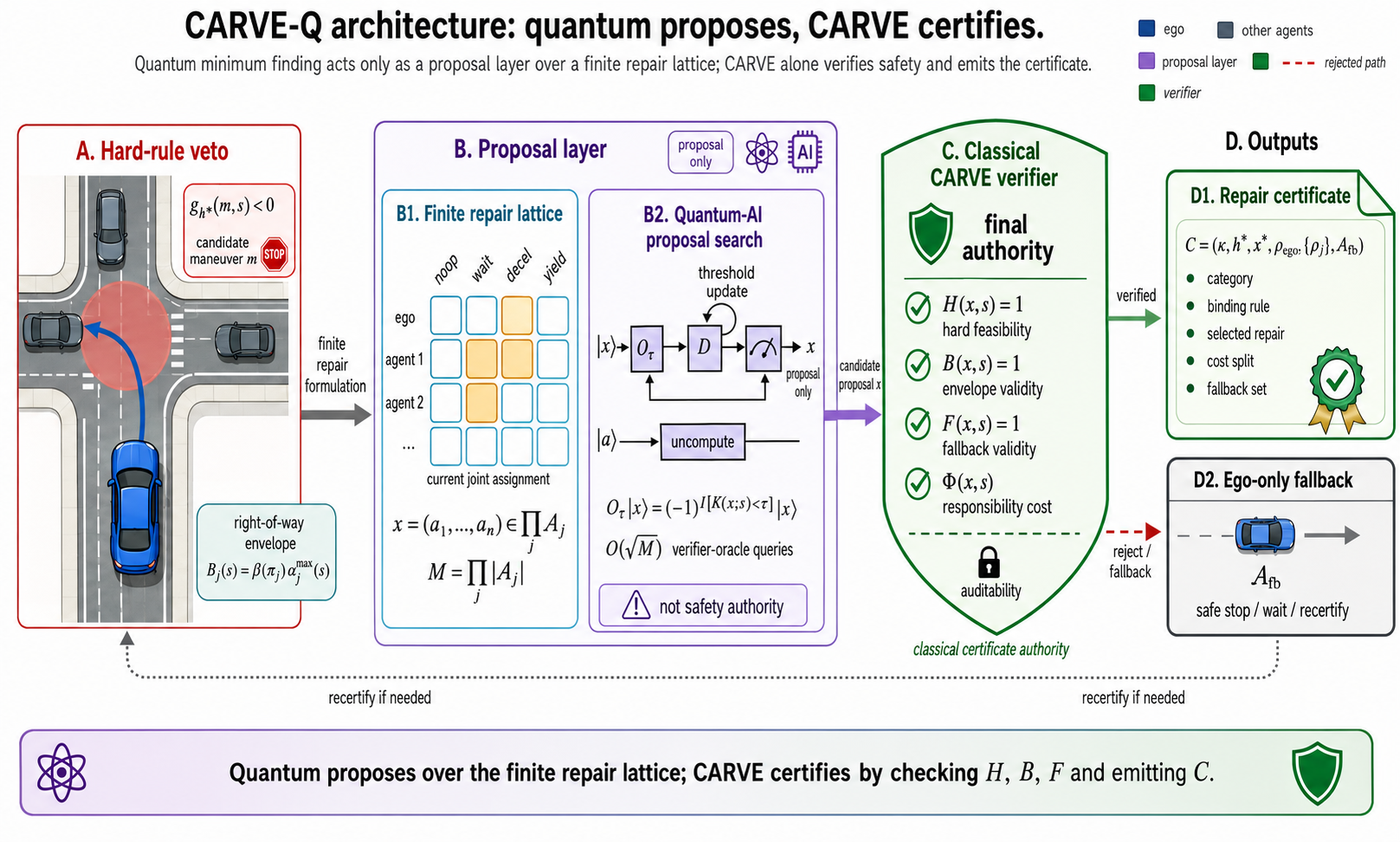}
\caption{\method\ architecture. \carve\ is the certified repair architecture:
it elevates a hard-rule veto into an auditable certificate, exposes a finite
multi-owner repair lattice, and lets quantum minimum finding search only that
black-box bottleneck. The quantum module is a proposal layer only; the final
certificate is emitted by the classical \carve\ verifier.}
\label{fig:architecture}
\end{figure*}

\section{From Vetoed Maneuvers to CARVE Certificates}

\carve\ begins from a different object than prediction-based interaction
planning: a repair certificate. The input is a scene \(s\), a candidate maneuver
\(m\), and a hard-rule prefix \(\rules\). If all margins are nonnegative,
\carve\ returns an empty satisfied certificate. If a binding rule \(h^\star\)
has negative margin, \carve\ constructs a finite repair lattice with ego-owned
edits and bounded agent-owned accommodation requests.

\begin{definition}[\carve\ certificate]
Given a scene \(s\), a vetoed maneuver \(m\), and a binding hard rule
\(h^\star\), a \carve\ certificate is
\[
\cert=(\kappa,h^\star,x^\star,\rho_{\mathrm{ego}},\{\rho_j\}_{j=1}^{n},\fallback),
\]
where \(\kappa\) is the certificate category, \(x^\star\) is the selected joint
repair assignment, \(\rho_{\mathrm{ego}}\) and \(\rho_j\) are
responsibility-weighted cost allocations, and \(\fallback\) is an ego-only
fallback action set executable without external cooperation.
\end{definition}

\begin{figure}[t]
\centering
\includegraphics[width=\columnwidth]{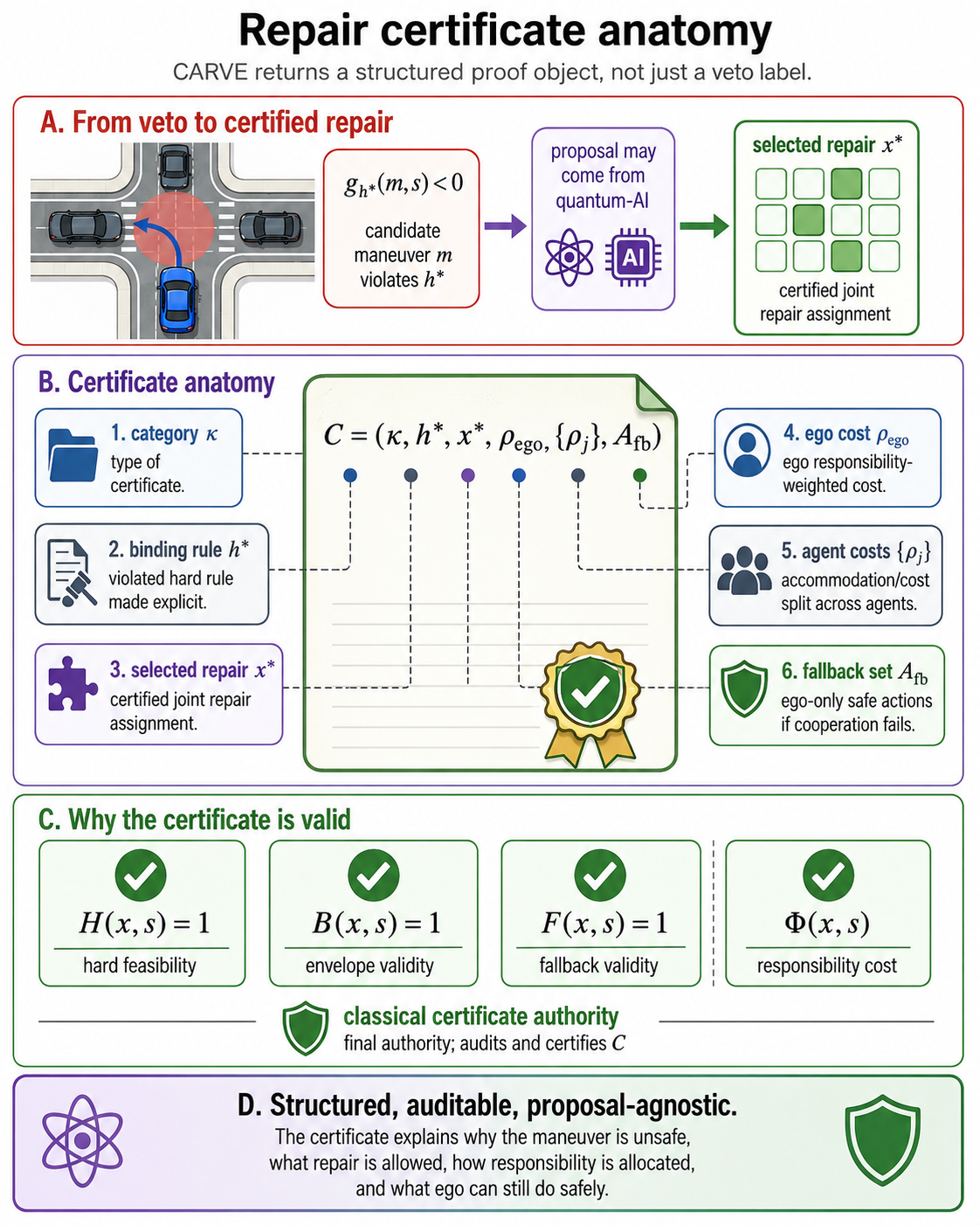}
\caption{Repair certificate anatomy. \carve\ returns a structured proof object,
not merely a veto label or a predicted trajectory. A proposal may come from
quantum search, classical search, or another heuristic; acceptance depends only
on the classical verifier recomputing \(H\), \(B\), \(F\), and \(\Phi\).}
\label{fig:certificate}
\end{figure}

The design separates three questions that are often conflated. First, hard-rule
feasibility asks whether applying a repair makes all declared safety margins
nonnegative. Second, affordability asks whether ego effort and every requested
agent accommodation remain within declared budgets. Third, fallback asks
whether ego retains an executable contingency if requested cooperation is not
observed. The certificate is accepted only when these predicates pass.

This distinction is what makes \method\ an AI architecture rather than only a
quantum algorithm. The quantum subroutine is useful because \carve\ defines a
finite but exponentially growing decision object. The verifier shield is useful
because safety-critical semantics remain interpretable and classically
auditable.

\section{Certified Interactive Repair Semantics}

\paragraph{Rules and owners.}
Let \(s\) contain ego state, agent states, semantic map context, and priority
roles \(\pi_j\). A hard rule \(h_\ell\in\rules\) returns margin
\(g_\ell(m,s)\). A repair owner can be ego or an interacting agent. Owner \(j\)
has a finite set \(\mathcal A_j\) containing no-op and tactical edits such as
wait, decelerate, yield, or nudge. A joint repair assignment is
\[
x=(a_1,\ldots,a_n)\in\mathcal A_1\times\cdots\times\mathcal A_n .
\]
The lattice uses semantic repair prototypes rather than arbitrary real-valued
controls. Continuous quantities such as delay or deceleration are discretized
conservatively into resolution cells. If rule margins are Lipschitz in these
repair parameters, refinement bounds the margin degradation between a feasible
continuous repair and its representative prototype; missed repairs are
therefore controlled false negatives, not unsafe false positives, because every
accepted assignment is rechecked by the classical verifier.

\paragraph{Right-of-way cooperation envelopes.}
\carve\ does not authorize arbitrary cooperation requests. For an interacting
agent \(j\), the request magnitude \(\Delta_j\) must satisfy
\[
0\le \Delta_j \le B_j(s),\qquad
B_j(s)=\beta(\pi_j)\alpha_j^{\max}(s).
\]
The term \(\alpha_j^{\max}(s)\) is a conservative kinematic accommodation
bound, such as the maximum safe yield or speed reduction available to agent
\(j\) under current speed and road conditions. The factor
\(\beta(\pi_j)\in[0,1]\) scales that bound by semantic right-of-way. Priority
holders use \(\beta(\pi_j)=0\), so no nonzero request to a priority agent can be
certified. This structurally separates physical reachability from normative
admissibility.

\paragraph{Cost and fallback.}
The objective \(\cost(x,s)\) is a responsibility-weighted cost over selected
ego and agent edits:
\[
\cost(x,s)=\rho_{\mathrm{ego}}(x,s)+\sum_j w(\pi_j)\rho_j(x,s).
\]
Here \(\rho_j(x,s)\ge 0\) is a kinematic or temporal penalty, such as delay or
integrated deceleration, and \(w(\pi_j)>0\) is a normative multiplier that
penalizes inappropriate burden shifting. Fallback prevents the system from
accepting a maneuver that consumes the last recovery option. In an elicited or joint certificate,
\(\fallback\) is an ego-only contingency, such as a safe stop or wait action,
that remains executable without relying on another driver's compliance. If the
requested accommodation is not observed, ego follows the fallback or recertifies
before proceeding. The certificate never asserts that another driver will
comply.

\paragraph{Certified joint-repair problem.}
Let \(H(x,s)\), \(B(x,s)\), and \(F(x,s)\) denote hard feasibility,
affordability/envelope validity, and fallback validity. The finite repair
problem is
\begin{align*}
x^\star &\in \arg\min_{x\in\mathcal X(s)} \cost(x,s)\\
\text{s.t.}\quad H(x,s)&=1,\quad B(x,s)=1,\quad F(x,s)=1,\\
0&\le \Delta_j(x,s)\le B_j(s),\quad \forall j,
\end{align*}
where \(\mathcal X(s)=\prod_j\mathcal A_j(s)\) and
\(M(s)=|\mathcal X(s)|\).
\carve\ can solve this exactly over the finite lattice, greedily for online
use, or through a verifier-oracle search layer. Only the verifier can emit
\(\cert\).

\begin{table}[t]
\centering
\setlength{\tabcolsep}{2pt}
\begin{tabular}{@{}lccccc@{}}
\toprule
Method & Cert. & RoW & Resp. & Fall. & Multi \\
\midrule
Hard gate & -- & \(\checkmark\) & -- & -- & -- \\
Prediction & -- & -- & -- & -- & \(\checkmark\) \\
Ego repair & partial & -- & -- & \(\checkmark\) & -- \\
Exact & \(\checkmark\) & \(\checkmark\) & \(\checkmark\) & \(\checkmark\) & \(\checkmark\) \\
\method & \(\checkmark\) & \(\checkmark\) & \(\checkmark\) & \(\checkmark\) & \(\checkmark\) \\
\bottomrule
\end{tabular}
\caption{CARVE-side capability gap. Exact denotes exhaustive enumeration over
the joint lattice; \method\ adds verifier-shielded quantum search while
retaining the same certificate properties.}
\label{tab:capability}
\end{table}

\section{Verifier-Shielded Quantum-AI Search}

Given the \carve\ verifier, define a finite-precision verifier-cost oracle.
Let \(\Phi_{\max}\) be a saturation value larger than any feasible encoded
repair cost. Then
\[
\tilde f(x;s)=
\begin{cases}
\cost(x,s), & H(x,s)\wedge B(x,s)\wedge F(x,s)=1,\\
\Phi_{\max}, & \text{otherwise}.
\end{cases}
\]
Infeasible assignments therefore implement \(+\infty\)-style semantics in a
bounded arithmetic register. To remove degeneracy, define the composite key
\[
K(x;s)=(\tilde f(x;s),\operatorname{lex}(x)),
\]
where \(\operatorname{lex}(x)\) is the integer assignment encoding. For
threshold key \(\tau\), the phase oracle marks lower-cost feasible assignments:
\[
O_\tau |x\rangle=(-1)^{\bbone[K(x;s)<_{\mathrm{lex}}\tau]}|x\rangle .
\]
Here \(\bbone[\cdot]\) is the 0/1 indicator. One verifier-oracle query is one
application of this reversible threshold phase oracle, including predicate
evaluation, cost computation, comparison, phase flip, and uncomputation.
\method\ runs a Durr--Hoyer-style minimum-finding loop over this oracle
\citep{durr1996minimum}: initialize a feasible threshold, amplify assignments
below the threshold, sample a candidate, update the threshold if the candidate
is lower cost, and finally pass the best candidate to the \carve\ verifier.
Minimum finding is the right primitive because certified repair is a finite
verifier-oracle optimization problem, not a policy-learning problem. Variational
quantum methods and QAOA are useful context, but they do not provide the
load-bearing worst-case guarantee used here.

\begin{algorithm}[t]
\caption{\method\ Verifier-Shielded Repair Search}
\label{alg:carveq}
\begin{algorithmic}[1]
\STATE Build the finite \carve\ repair lattice.
\STATE Initialize threshold \(\tau\) from any feasible repair or fallback.
\IF{no finite feasible threshold is available}
  \STATE \textbf{return} refusal certificate or ego fallback.
\ENDIF
\REPEAT
  \STATE Reversibly construct \(O_\tau\), mapping infeasible states to \(\Phi_{\max}\).
  \STATE Apply amplitude amplification over assignment states.
  \STATE Sample candidate \(\hat{x}\); evaluate \(K(\hat{x};s)\) classically.
  \IF{\(K(\hat{x};s)<_{\mathrm{lex}}\tau\)}
    \STATE Store \(\hat{x}\) and set \(\tau\leftarrow K(\hat{x};s)\).
  \ENDIF
\UNTIL{threshold stops improving or the prescribed minimum-finding schedule ends}
\STATE Re-run the \carve\ verifier on the stored candidate.
\STATE Emit \(\cert\) only if \(H=1\), \(B=1\), and \(F=1\); otherwise return fallback/refusal.
\end{algorithmic}
\end{algorithm}

\section{Theory and Oracle Construction}

\begin{theorem}[Verifier-shielded certificate soundness]
For any proposal generator \(G\), including quantum search, classical search,
random sampling, or a faulty heuristic, if \method\ returns an accepting
certificate, then the selected repair satisfies hard-rule feasibility,
right-of-way envelope validity, affordability, responsibility accounting, and
fallback validity.
\end{theorem}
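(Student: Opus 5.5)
The soundness claim is a statement about the acceptance path of Algorithm~\ref{alg:carveq}, not about the generator, so I would argue by tracing that path. The first step is to fix an arbitrary proposal generator \(G\) and note that nothing in the argument uses any property of \(G\): not how its candidate was sampled, not whether amplitude amplification succeeded, and not whether the threshold loop was run correctly. The second step is to observe that the only line of Algorithm~\ref{alg:carveq} that emits an accepting certificate \(\cert\) is the final one. That line is reached only after the classical \carve\ verifier is re-run on the stored candidate \(\hat x\), and it emits only if \(H(\hat x,s)=1\), \(B(\hat x,s)=1\), and \(F(\hat x,s)=1\). Every other exit returns a refusal certificate or the ego fallback, and these are not accepting certificates. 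So if an accepting \(\cert\) is returned with \(x^\star=\hat x\), the three predicates held at that final check.

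The third step unpacks each predicate into the properties named in the theorem, using the definitions in the semantics section.
\begin{itemize}
\item \(H=1\) means every margin \(g_\ell(x^\star,s)\ge 0\) for \(h_\ell\in\rules\). In particular this covers the binding rule \(h^\star\), which gives hard-rule feasibility.
\item \(B=1\) is affordability and envelope validity. By the constraint \(0\le\Delta_j\le B_j(s)=\beta(\pi_j)\alpha_j^{\max}(s)\), it gives right-of-way envelope validity. For priority holders, \(\beta(\pi_j)=0\) forces \(\Delta_j=0\).
\item \(F=1\) gives an ego-only fallback \(\fallback\) that does not rely on compliance by other agents.
\end{itemize}

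Responsibility accounting needs its own argument, because it is not one of the three predicates. Here I would argue that the verifier itself recomputes \(\rho_{\mathrm{ego}}\), \(\rho_j\), and \(\cost\) from \(x^\star\) and the weights \(w(\pi_j)\), and writes these values into \(\cert\). The certificate's cost split is therefore a deterministic function of \((x^\star,s)\) and is never copied from the proposal. Any cost value the quantum module reported, including the threshold key \(K\) or a truncated \(\tilde f\), is discarded.

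The main obstacle is making this precise without overstating it. Two points need care. First, the claim is relative to the correctness of the classical verifier and of the declared rule set. It must be stated as ``sound with respect to \(H,B,F,\cost\) as specified,'' not as physical safety. Second, I must rule out that a faulty \(G\), or a finite-precision artifact such as \(\Phi_{\max}\) saturation or an overflow in the quantum cost register, can leak into acceptance. To handle this I would stress that the final check recomputes everything classically from \(\hat x\) alone, so the oracle's encoding errors can only cause refusals or suboptimality, never acceptance of an infeasible repair. Optimality is explicitly not part of the claim.
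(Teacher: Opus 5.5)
Your proposal is correct and is essentially the paper's argument: an accepting certificate is emitted only after the deterministic \carve\ verifier re-runs the full predicate suite on the returned \(\hat x\), so soundness holds independently of the generator \(G\). Your separate treatment of responsibility accounting (the verifier recomputes \(\rho_{\mathrm{ego}}\), \(\rho_j\), and \(\Phi\) from \(\hat x\)) and of finite-precision artifacts also matches the paper's own remarks. The paper states that acceptance depends on recomputing \(H\), \(B\), \(F\), and \(\Phi\), and that encoding errors can cause only false negatives, never unsafe acceptance.
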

\noindent\emph{Proof sketch.}
The quantum routine is not a certificate authority. The final assignment is
accepted only after the deterministic \carve\ verifier re-runs the entire
predicate suite on the returned \(\hat{x}\). The certificate is emitted only if
hard feasibility, envelope validity, affordability, and fallback validity all
pass. Thus soundness is independent of whether the proposal came from quantum
search, classical search, random sampling, or a faulty heuristic. Full proofs
are in the supplementary material.

\begin{lemma}[Priority non-elicitation]
If \(\beta(\pi_j)=0\) for a priority holder \(j\), then no accepting
\carve\ certificate contains a positive requested accommodation from \(j\).
\end{lemma}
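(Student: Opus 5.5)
The plan is a direct contradiction argument that goes through the envelope constraint. An accepting certificate exists only if the verifier has checked the constraint against the right-of-way-scaled bound. For a priority holder that bound collapses to zero.

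\emph{Step 1: reduce to the verifier.} Suppose an accepting \carve\ certificate \(\cert=(\kappa,h^\star,x^\star,\rho_{\mathrm{ego}},\{\rho_j\},\fallback)\) is emitted. By the verifier-shielded soundness theorem, and by Algorithm~\ref{alg:carveq}, which emits \(\cert\) only when \(H=1\), \(B=1\), and \(F=1\), the selected assignment \(x^\star\) satisfies every constraint of the certified joint-repair problem. In particular it satisfies the envelope constraint \(0\le\Delta_j(x^\star,s)\le B_j(s)\) for every interacting agent \(j\). This holds regardless of the proposal source, whether quantum, classical, or a faulty heuristic.

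\emph{Step 2: evaluate the envelope.} By definition \(B_j(s)=\beta(\pi_j)\alpha_j^{\max}(s)\). The hypothesis \(\beta(\pi_j)=0\) gives \(B_j(s)=0\). This needs only that \(\alpha_j^{\max}(s)\) is a finite real number, which holds because it is a conservative kinematic bound.

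\emph{Step 3: conclude.} The two inequalities give \(0\le\Delta_j(x^\star,s)\le 0\), so \(\Delta_j(x^\star,s)=0\). Hence the certificate contains no positive accommodation request to \(j\). This contradicts any assumed \(\Delta_j>0\).

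The main obstacle is definitional. We must make sure that "requested accommodation from \(j\) recorded in the certificate" means exactly \(\Delta_j(x^\star,s)\), and that the envelope check is part of the predicate \(B(x,s)\) that the verifier recomputes. We must also make sure it is not merely a side constraint that a proposal generator could bypass. I would handle this by stating explicitly that \(B(x,s)=1\) entails \(\Delta_j(x,s)\le B_j(s)\) for all \(j\), as the problem formulation lists it among the certified constraints. I would also note that the cost entry \(\rho_j\) is computed from \(x^\star\), so it cannot encode a hidden nonzero request.
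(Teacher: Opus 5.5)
Your proposal is correct and follows the paper's own argument: \(\beta(\pi_j)=0\) forces \(B_j(s)=0\), so any \(\Delta_j>0\) violates \(0\le\Delta_j\le B_j(s)\) and is rejected by the verifier's envelope check before a certificate can be emitted. Your definitional concern is settled in the paper, which defines \(B(x,s)\) as the affordability/envelope-validity predicate that the classical verifier recomputes.
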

\noindent\emph{Proof sketch.}
Priority implies \(B_j(s)=0\). Any positive request has \(\Delta_j>0\), so it
violates \(0\le\Delta_j\le B_j(s)\) and is rejected by the envelope predicate.

\begin{theorem}[Verifier-oracle query separation]
In the black-box verifier-cost oracle model with a finite lattice of size
\(M\), finite-precision costs, lexicographic tie-breaking by \(K(x;s)\), and
oracle access only, any deterministic classical algorithm that exactly solves
minimum finding requires \(\Theta(M)\) queries in the worst case; bounded-error
randomized algorithms require \(\Omega(M)\). Quantum minimum finding returns the
distinguished minimum-cost feasible assignment with high probability using
\(O(\sqrt{M})\) oracle queries.
\end{theorem}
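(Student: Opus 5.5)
The plan is to split the statement into three parts: a classical deterministic lower and upper bound, a randomized lower bound, and a quantum upper bound. Each part uses the verifier-cost oracle exactly as defined above. A query on \(x\) reveals \(K(x;s)\), and the threshold oracle \(O_\tau\) reveals \(\bbone[K(x;s)<_{\mathrm{lex}}\tau]\). The classical algorithm is granted the stronger value oracle, so that the lower bounds also cover the weaker threshold form.

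For the \textbf{classical upper bound}, exhaustive enumeration queries each of the \(M\) assignments once and keeps the lexicographically smallest key. The composite key \(K\) has distinct second coordinates \(\operatorname{lex}(x)\), so the minimizer is unique and well defined; this gives \(O(M)\).

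For the \textbf{deterministic lower bound}, I would use an adversary argument. Fix a base instance in which every assignment is infeasible, so \(\tilde f\equiv\Phi_{\max}\). The adversary answers \(\Phi_{\max}\) to every query. If an algorithm halts after fewer than \(M\) queries, some assignment \(x_0\) was never queried. The adversary then exhibits two instances consistent with all answers given. In the first, \(x_0\) is still infeasible, and the minimum is the infeasible assignment with the smallest \(\operatorname{lex}\) (or the refusal). In the second, \(x_0\) is feasible with cost below \(\Phi_{\max}\). These instances have different minimizers, yet the algorithm's output is identical on both, so it errs on one of them. Hence at least \(M\) queries are needed, and with the upper bound this gives \(\Theta(M)\).

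The step I expect to be the main obstacle is making this construction legitimate inside the \carve\ semantics. One must argue that the verifier predicates \(H,B,F\) are arbitrary black boxes, so that a single assignment can be toggled feasible without changing any other answer. This is exactly the opaque-verifier assumption, and I would state it explicitly as the model's admissible instance class.

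For the \textbf{randomized lower bound}, I would reduce from unstructured search. Embed an OR/search instance on \(M\) bits by letting \(x\) be feasible, with a fixed cost \(c<\Phi_{\max}\), exactly when bit \(x\) is \(1\). Exact minimum finding with bounded error then decides whether a marked item exists and locates it. I would apply Yao's principle with the uniform distribution over the \(M\) single-marked instances together with the empty instance. Any randomized algorithm making \(q\) queries finds the hidden item with probability at most about \(q/M\), which forces \(q=\Omega(M)\).

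For the \textbf{quantum upper bound}, I would invoke Dürr--Høyer directly. The keys are distinct, so the minimum-finding analysis applies with the total order \(<_{\mathrm{lex}}\). Each iteration uses Boyer--Brassard--Høyer--Tapp amplitude amplification with unknown marked count, applied to \(O_\tau\). The expected total number of oracle calls before the threshold reaches the global minimum key is at most \(22.5\sqrt M\). By Markov's inequality, truncating at a constant multiple of \(\sqrt M\) succeeds with probability at least \(2/3\), and \(O(\log(1/\delta))\) repetitions, keeping the best classically evaluated key, boost this to \(1-\delta\).

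Two checks remain for this part. First, the initial threshold from Algorithm~\ref{alg:carveq} can only help. Second, when no feasible assignment exists, all keys carry \(\Phi_{\max}\); the procedure then returns the lex-minimal infeasible assignment, and the final verifier rejects it, giving the refusal case. Finally, by the finite-precision assumption \(O_\tau\) is a well-defined reversible unitary, so each call counts as one query. Lower bounds in the reverse direction (\(\Omega(\sqrt M)\) via BBBV) are not needed for the statement.
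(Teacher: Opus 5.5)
Your proposal is correct and follows the same route as the paper's sketch. Like the paper, you hide the optimum at an unqueried location for the classical bounds, use lexicographic tie-breaking by $K(x;s)$ to make the minimizer distinguished, and invoke Durr--Hoyer minimum finding over amplitude amplification for the $O(\sqrt{M})$ bound. You also supply details the paper leaves implicit: the enumeration upper bound, the Yao-principle reduction, and the Markov truncation with repetition.

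One small precision point concerns your all-$\Phi_{\max}$ adversary. If the only unqueried assignment is the lex-smallest one, your two instances have the same minimizer, unless the output must also distinguish certificate from refusal. As written, the argument therefore yields $M-1$ rather than $M$ queries. Answering every query with a fixed feasible cost strictly between the cheapest encodable cost and $\Phi_{\max}$ gives exactly $M$, and either way $\Theta(M)$ is unaffected.
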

\noindent This is a black-box verifier-oracle result. It does not assert
dominance over white-box classical solvers such as branch-and-bound, CP-SAT,
MILP, or structure-exploiting local search on structured instances; E2 isolates
the black-box regime empirically. If no feasible assignment exists,
\(\tilde f(x;s)=\Phi_{\max}\) for all \(x\), and the verifier issues a refusal
or fallback certificate.
\noindent\emph{Proof sketch.}
The classical lower bound follows from the unstructured search/minimum-finding
lower bound: a black-box table can hide its unique optimum at any unqueried
location \citep{bennett1997strengths,boyer1998tight}. Lexicographic
tie-breaking converts cost-degenerate optima into a distinguished minimum. The
quantum upper bound follows from amplitude amplification and minimum finding.

For scale, a 10-owner lattice with four choices per owner has
\(M=4^{10}=1{,}048{,}576\) and \(\sqrt{M}=1{,}024\), before accounting for
constant factors and oracle-construction cost.

\begin{theorem}[Polynomial reversible constructibility]
With finite action encodings, \(p\) predicate blocks, pairwise constraints, and
fixed-point bit-width \(b\) for cost and request magnitudes, the \carve\
verifier-cost threshold predicate can be embedded into a reversible phase oracle
with \(\operatorname{poly}(n,p,b)\) logical gates and ancillas.
\end{theorem}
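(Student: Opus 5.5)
The plan is a standard compile-to-reversible-circuit argument. First I build a classical Boolean circuit for the threshold predicate \(x\mapsto \bbone[K(x;s)<_{\mathrm{lex}}\tau]\) of size \(\operatorname{poly}(n,p,b)\). I then apply the generic Bennett-style conversion (compute, copy/phase-kick, uncompute), which turns any classical circuit of size \(S\) into a reversible circuit with \(O(S)\) Toffoli/CNOT/NOT gates and \(O(S)\) ancillas. Finally I implement the phase \((-1)^{\bbone[\cdot]}\) by computing the predicate bit into an ancilla, applying \(Z\) to it, and uncomputing. That last step returns all ancillas to \(|0\rangle\), so the resulting operation is exactly \(O_\tau\) on the assignment register, with no garbage entangled.

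\textbf{Step 1: encoding.} Each owner's action \(a_j\in\mathcal A_j\) is stored in \(\lceil\log_2|\mathcal A_j|\rceil\) qubits. Finite action encodings let me treat \(\max_j|\mathcal A_j|\) as a constant, or at most polynomial in \(b\). The register \(x\) therefore has \(O(n\log|\mathcal A|)\) qubits, and \(\operatorname{lex}(x)\) is simply the concatenated bit string, so it needs no gates.

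\textbf{Step 2: per-owner and pairwise quantities as lookup tables.} For each owner, the request magnitude \(\Delta_j(a_j,s)\), the penalty \(\rho_j(a_j,s)\), and the envelope bound \(B_j(s)=\beta(\pi_j)\alpha_j^{\max}(s)\) are \(b\)-bit fixed-point constants or table lookups indexed by \(a_j\). Each such lookup is a multiplexer of \(O(|\mathcal A_j|\,b)\) gates. The envelope check \(0\le\Delta_j\le B_j\) is a \(b\)-bit comparator. I read ``pairwise constraints'' as saying that every predicate block of \(H\) and \(F\) depends on at most two owners' actions. Each pairwise margin \(g_\ell(a_i,a_k,s)\ge 0\) is then a lookup over \(|\mathcal A_i||\mathcal A_k|\) entries followed by a sign test. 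With \(p\) blocks and at most \(O(n^2)\) pairs, this gives \(O(p\,n^2 b)\) gates.

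\textbf{Step 3: combining into the key comparison.} The bits \(H\), \(B\), and \(F\) are ANDs of the block outputs, computed with an \(O(pn^2)\) Toffoli tree. The cost \(\cost=\rho_{\mathrm{ego}}+\sum_j w(\pi_j)\rho_j\) is a sum of \(n+1\) constant-scaled \(b\)-bit terms. Constant multiplication and ripple-carry addition give \(O(nb^2)\) gates, with the width growing to \(b+O(\log n)\). A controlled multiplexer on \(H\wedge B\wedge F\) then selects either \(\cost\) or the constant \(\Phi_{\max}\), giving \(\tilde f\). The lexicographic comparison \(K<_{\mathrm{lex}}\tau\) is a comparator on the concatenation \((\tilde f,\operatorname{lex}(x))\) against the classical constant \(\tau\), which is linear in bit length. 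Summing the three steps gives \(\operatorname{poly}(n,p,b)\) gates and ancillas.

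\textbf{Main obstacle.} The key question is whether the hypotheses really bound each predicate block's complexity. The statement speaks of ``\(p\) predicate blocks'' without saying each block is itself a poly-size Boolean function of its arguments. I will make this explicit: every block is either a pairwise finite table or a \(b\)-bit fixed-point arithmetic expression of constant depth. That holds for the Lipschitz-discretized margins of the repair semantics, since they are precomputed per prototype cell. I will also check the saturation semantics: \(\Phi_{\max}\) must exceed every feasible encoded cost, so it needs only \(b+O(\log n)\) bits, and the comparison stays polynomial. Once this modelling assumption is stated, the reversible embedding is routine.
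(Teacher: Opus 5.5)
Your proposal is correct and follows essentially the same route as the paper's sketch, which likewise assembles the oracle from finite equality decoders (your table-lookup multiplexers), controlled additions, comparators, Boolean conjunctions, a phase flip, and uncomputation, each of polynomial size. Your version gives more explicit gate accounting and states the modelling assumptions (bounded \(|\mathcal A_j|\) and poly-size predicate blocks) that the paper leaves implicit in ``polynomial size in the encoded scene description.''
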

\noindent\emph{Proof sketch.}
The verifier is composed of finite equality decoders, controlled additions,
comparators, Boolean conjunctions, a phase flip, and uncomputation. Each block
has polynomial size in the encoded scene description. The exponential term is
the number of assignments \(M\), not the cost of one oracle evaluation.
Conservative finite-precision bounds can introduce false negatives that waste
search effort, but not unsafe false positives, because the continuous
classical verifier rechecks every returned assignment before certification.

\section{Experiments}

The experiments follow the paper's trust boundary. E1 demonstrates minimum
finding on \carve\ verifier-cost oracles. E2 isolates the black-box regime by
destroying semantic neighborhood structure. E3 checks reversible-oracle
constructibility. E4 validates certificate semantics on replay data. E5 reports
a QAOA/local-search audit explaining why minimum finding, not variational
optimization, is the load-bearing quantum primitive.

\paragraph{E1: Statevector minimum finding.}
We execute phase-oracle flips and diffusion steps on \carve\ repair oracles.
Because the simulated cost table is known, the statevector experiment uses the
number of marked states to select near-optimal Grover rotation counts at each
threshold. This is a noise-free oracle-model demonstration; standard
Durr--Hoyer schedules with unknown marked count use randomized rotation
schedules and retain the same \(O(\sqrt M)\) expected query order. Calibration
only improves constants for transparent query accounting.

\begin{table}[t]
\centering
\setlength{\tabcolsep}{3pt}
\begin{tabular}{@{}rrrrr@{}}
\toprule
Agents & \(M\) & \(\sqrt M\) & DH calls & Success \\
\midrule
2 & 16 & 4 & \(2.63\pm1.80\) & 1.00 \\
3 & 64 & 8 & \(8.75\pm2.59\) & 1.00 \\
4 & 256 & 16 & \(12.00\pm8.92\) & 1.00 \\
5 & 1,024 & 32 & \(32.00\pm20.39\) & 1.00 \\
6 & 4,096 & 64 & \(72.88\pm57.79\) & 1.00 \\
7 & 16,384 & 128 & \(180.63\pm60.52\) & 1.00 \\
8 & 65,536 & 256 & \(434.38\pm161.79\) & 1.00 \\
\bottomrule
\end{tabular}
\caption{Statevector minimum-finding query scaling on \carve\ repair oracles.
DH denotes Durr--Hoyer-style measured phase-oracle calls; values are
mean\(\pm\)std over eight instances per size.}
\label{tab:statevector}
\end{table}

At eight agents with four choices each, exact enumeration evaluates 65,536
assignments; the calibrated statevector routine uses 434.38 phase-oracle calls
on average and all returned assignments pass classical verification.

\paragraph{E2: Black-box relabeling stress.}
To isolate the theorem's model, we apply a secret reversible relabeling to the
same \carve\ verifier-cost table. This preserves the table but destroys
semantic neighborhood structure. In a white-box lattice, a small deceleration
edit may be adjacent to a medium deceleration edit; after relabeling, Hamming
neighbors no longer track semantic similarity. Same-budget bit-flip local search
and random search then lose their neighborhood advantage, while minimum finding
remains exact.

\begin{table}[t]
\centering
\setlength{\tabcolsep}{4pt}
\begin{tabular}{rrrrr}
\toprule
Agents & \(M\) & DH & Local & Random \\
\midrule
4 & 256 & 1.00 & 0.38 & 0.38 \\
5 & 1,024 & 1.00 & 0.00 & 0.00 \\
6 & 4,096 & 1.00 & 0.00 & 0.00 \\
7 & 16,384 & 1.00 & 0.00 & 0.00 \\
8 & 65,536 & 1.00 & 0.00 & 0.00 \\
\bottomrule
\end{tabular}
\caption{Black-box relabeled oracle stress. Entries are exact-hit rates under
matched oracle-interaction budgets. The verifier-cost table is unchanged; only
assignment labels are reversibly relabeled to remove semantic locality.}
\label{tab:blackbox}
\end{table}

\paragraph{E3: Oracle construction counts.}
The largest counted setting has 10 agents, 20 assignment bits, 45 pair
constraints, 22,081 Toffoli gates, 51,872 CNOT gates, and 1,104 logical
qubits. These counts support constructibility and reveal overhead; they do not
claim current hardware wall-clock advantage. The construction targets a
fault-tolerant quantum-computing setting; wall-clock advantage depends on future
hardware and error-correction overhead.

\paragraph{E4: Lanelet2-grounded replay.}
We use 589 INTERACTION replay episodes with Lanelet2 geometry
\citep{zhan2019interaction,poggenhans2018lanelet2}. Replay Human Alignment
(RHA) is a behavioral diagnostic, not a safety metric. False-veto recovery rate
(FVRR) measures accepted certificates over initially hard-vetoed but
human-resolved episodes. Blame-consistency rate (BCR) measures whether the
responsibility allocation respects distinct-duty ordering; pairwise BCR is its
pair-level form. The RHA improvement comes from the \carve\ certificate layer,
not from the quantum backend: \method\ searches for assignments that must
satisfy the same predicates. The verifier-gated layer improves full replay RHA from
28.23\% to 41.82\% and held-out RHA from 26.97\% to 40.45\%, while preserving
97.88\% FVRR, 100\% right-of-way (RoW) respect, 100\% BCR, 100\% pairwise BCR,
and zero priority false positives. The RHA gain is not imitation accuracy:
rule-compliant certificates should diverge from aggressive or unlawful human
behavior. It indicates better recovery of lawful compromises without weakening
certificate predicates.

\begin{table}[t]
\centering
\setlength{\tabcolsep}{3pt}
\begin{tabular}{lrrl}
\toprule
Metric & Base & Gated & Role \\
\midrule
Full RHA & 28.23 & 41.82 & diagnostic \\
Held-out RHA & 26.97 & 40.45 & diagnostic \\
FVRR & 97.88 & 97.88 & certificate \\
RoW respect & 100.00 & 100.00 & certificate \\
BCR & 100.00 & 100.00 & certificate \\
Priority FP & 0 & 0 & certificate \\
\bottomrule
\end{tabular}
\caption{Replay metrics. RHA is diagnostic only; FVRR, right-of-way respect,
BCR, and priority false positives are certificate metrics.}
\label{tab:replay}
\end{table}

\begin{figure*}[t]
\centering
\includegraphics[width=\textwidth]{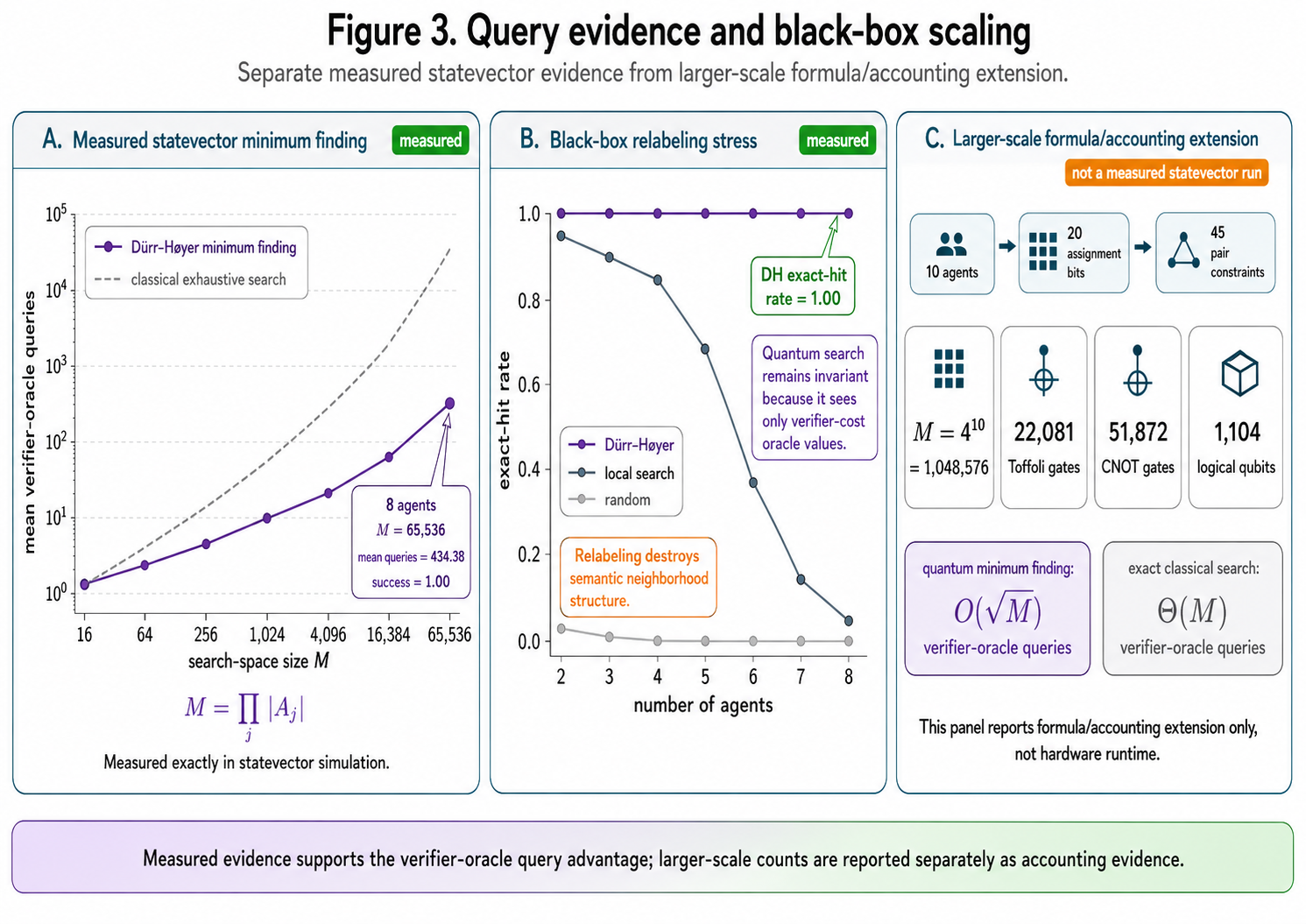}
\caption{Query evidence and black-box scaling. Panels A and B report measured
statevector evidence on \carve\ verifier-cost oracles; Panel C is explicitly a
larger-scale formula/accounting extension, not a measured statevector or
hardware-runtime result.}
\label{fig:evidence}
\end{figure*}

\paragraph{E5: Design-discipline audit.}
Small white-box QUBO tasks favor multi-start local search: local search hits
exact optima in 96.67\% of tasks, while \(p=1\) QAOA hits 23.33\%. This is not
a weakness of the paper's main claim; it is why \method\ uses theorem-backed
minimum finding rather than making QAOA the load-bearing result. Local search
dominates only when it can exploit the white-box neighborhood structure that
the conservative black-box model refuses to assume.

\begin{table}[t]
\centering
\setlength{\tabcolsep}{3pt}
\begin{tabular}{@{}p{0.42\columnwidth}p{0.42\columnwidth}@{}}
\toprule
\textbf{Supported} & \textbf{Excluded} \\
\midrule
Certified repair semantics & Hardware speedup \\
Verifier-shielded soundness & Quantum safety certification \\
Black-box query separation & Universal quantum superiority \\
Statevector oracle evidence & QAOA dominance \\
Replay certificate preservation & Deployment readiness \\
\bottomrule
\end{tabular}
\caption{Claim boundary. The paper's evidence supports a verifier-oracle
quantum-AI search role inside \carve; every safety-critical claim remains
classically certified.}
\label{tab:boundary}
\end{table}

\section{Related Work}

\textbf{Rule-constrained and certified autonomy.}
RSS, rulebooks, runtime shielding, and formal safety define interpretable
constraints and interventions \citep{shalev2017rss,censi2019rulebooks,
alshiekh2018shielding,garcia2015safe,kochenderfer2012acas,fraichard2007ics}.
They sit within a broader motion-planning tradition that includes sampling,
optimal planning, and urban-driving surveys \citep{lavalle2006planning,
lavalle1998rrt,karaman2011sampling,paden2016survey}.
\method\ differs by asking whether a vetoed maneuver has an attributable,
affordable, multi-owner repair certificate.
Ego-only recovery and trajectory repair alter a geometric plan; \carve\ repairs
an interactive decision object and records who owns each bounded accommodation.

\textbf{Interactive prediction and game-theoretic driving.}
Prediction and interaction-aware planning model how agents may respond
\citep{sadigh2016planning,schwarting2019social,kuderer2015learning,
alahi2016social,deo2018convolutional,salzmann2020trajectron}. Prediction can
be useful for candidate generation, but likelihood is not a certificate.
\carve\ uses replay alignment as a diagnostic while keeping certification in
rule predicates.

\textbf{Multi-agent combinatorial search.}
MAPF and coordination methods study joint decisions in difficult search spaces
\citep{silver2005cooperative,sharon2015cbs,stern2019mapf,ma2017lifelong,
okumura2023lacam,van2011reciprocal}. \carve\ differs because each assignment
must also satisfy driving-specific hard rules, right-of-way envelopes,
responsibility costs, and fallback validity.

\textbf{Quantum AI and quantum optimization.}
Quantum machine learning explores feature maps, variational policies, and QAOA
\citep{biamonte2017qml,schuld2019quantum,havlicek2019supervised,
broughton2020tensorflow,jerbi2021variational,farhi2014qaoa,preskill2018nisq,
cerezo2021variational,mcclean2018barren}; standard quantum computation
references provide the circuit and oracle model used by the minimum-finding
layer \citep{nielsen2010quantum}. \method\ takes a different
quantum-AI route: it embeds a provable quantum search primitive inside a
symbolic safety architecture and leaves the certificate auditable and
classically verified.
\method\ does not claim a new quantum search algorithm; the contribution is the
reduction of certified repair to a verifier-oracle minimum-finding problem and
the verifier-shielded safety architecture around that reduction.

\section{Discussion and Claim Boundary}

\method\ contributes to trustworthy quantum-AI integration: it does not replace
classical AI safety with a quantum device, but uses quantum computation to
search a formally defined bottleneck. This is why the negative QAOA result
helps the story. Variational heuristics may be useful in other settings, but
the load-bearing result here is verifier-oracle minimum finding.

The conservative black-box design is central. A white-box solver may be faster
when the verifier exposes stable structure, and our theorem does not deny that.
The safety motivation is different: when verifier internals are complex,
changing, hidden, or adversarially coupled, the robust assumption is to make no
structural assumption at all. Quantum minimum finding is valuable precisely
because it retains a square-root query guarantee in that worst-case oracle
model.

When is \method\ preferable? If the verifier is convex, separable, or otherwise
amenable to source-aware optimization, branch-and-bound, CP-SAT, MILP, or local
search should be used. \method\ targets the least structured regime: changing
rulebooks, proprietary shields, relabeled semantics, or adversarially coupled
predicates where no stable neighborhood model should be assumed. In that regime,
the square-root verifier-query guarantee is a robustness guarantee rather than a
near-term hardware-speedup claim.

The problem template is broader than driving. The pattern ``hard-rule veto
\(\rightarrow\) finite multi-owner repair \(\rightarrow\) auditable
certificate'' also appears in robot collaboration, air-traffic management, and
smart-factory scheduling. We evaluate driving because it exposes concrete
right-of-way and fallback semantics; the verifier-shielded search idea applies
whenever a domain supplies a finite repair lattice and a deterministic
certificate verifier.

Finally, ``quantum proposes; \carve\ certifies'' is a transferable safe
integration paradigm. A noisy, suboptimal, or even adversarial quantum module
can only propose. A deterministic classical verifier retains sole authority, so
proposal errors cannot become safety certificates. This separation addresses a
core adoption concern for quantum-AI in safety-critical systems.

The claim boundary is precise. Supported claims are certified interactive
repair semantics, verifier-shielded certificate soundness, black-box
query-complexity separation, reversible-oracle constructibility, statevector
oracle evidence, and replay certificate preservation. Excluded claims are
present-day quantum hardware speedup, universal quantum superiority, QAOA
dominance, quantum safety certification, and real-world deployment readiness.
This boundary strengthens the paper because every safety-critical property is
checked by \carve.

\section{Conclusion}

We presented \carve, a certified repair architecture that elevates a hard-rule
veto from terminal rejection into an auditable certificate, and \method, a
verifier-shielded quantum search layer for the resulting joint-repair
bottleneck. The key shift is to separate what is certified from how it is found.
\carve\ defines and verifies the binding rule, repair assignment,
responsibility split, right-of-way envelope, and fallback; \method\ searches the
black-box lattice with theorem-backed minimum finding but never becomes a safety
authority. The broader pattern is deliberately trust-bounded: quantum proposes;
\carve\ certifies.

\bibliography{references}

\end{document}